\newcommand\eps{\ensuremath{\varepsilon}}
\begin{document}
\title{That which we call \emph{private}\footnote{A previous version of this note accompanied a poster presented at the 2019 USENIX Security Symposium. This is the full version.}}
\author{
\'Ulfar Erlingsson \and
Ilya Mironov\thanks{Work done while at Google.} \and
Ananth Raghunathan \and
Shuang Song}
\date{\vspace{-0.1in}\textit{Google Research, Brain}}
\maketitle

\begin{abstract}
The guarantees of security and privacy defenses
are often strengthened by
relaxing the assumptions
made about attackers
or the context in which defenses are deployed.
Such relaxations can be
a highly worthwhile topic of exploration---even though they
typically entail
assuming a weaker, less powerful adversary---because
there may indeed be great
variability in
both attackers' powers
and their context.

However,
no weakening or contextual discounting of attackers' power
is assumed
for
what some have called ``relaxed definitions''
in the analysis of
differential-privacy guarantees.
Instead, the definitions so named
are the basis of refinements and more advanced analyses
of the worst-case implications
of attackers---without any change assumed in attackers' powers.

Because they
more precisely bound
the worst-case privacy loss,
these improved analyses can greatly strengthen the differential-privacy upper-bound guarantees---sometimes
lowering the differential-privacy epsilon
by orders-of-magnitude.
As such, to the casual eye, these analyses may appear to imply a reduced privacy loss.
This is a false perception: the privacy loss of any concrete mechanism
cannot change with the choice of a worst-case-loss upper-bound analysis technique.
Practitioners must be careful not to equate real-world privacy
with differential-privacy epsilon values,
at least not without full consideration of the context.
\end{abstract}

\section{Introduction}
\begin{quote}
  ``\textit{What's in a name? That which we call a rose\\
  By any other name would smell as sweet.}''\vspace{-0.15in}\begin{flushright}
  Romeo and Juliet (II, ii, 1--2)\end{flushright}
\end{quote}
\vspace{-0.05in}
Alongside greatly increased adoption of machine learning (ML),
its privacy aspects have seen increased
attention, both
offensively~\cite{shokri2017membership,song2017machine,carlini2018secret,yeometal}
and defensively~\cite{DP-DL,mohassel2017secureml,pate,tfprivacy}.
In this, the gold-standard definitions of differential
privacy (DP) have rightly played a key role~\cite{DMNS}.
In particular,
in industry,
DP techniques are being used to
enable training of high-utility models that 
preserve the privacy of training data~\cite{rappor,appledp,DKY17-Microsoft,tfprivacy}.

While DP provides 
rigorous privacy guarantees,
they take the form of analytic upper bounds
that hold equally true for worst-case,
artificial adversarially-crafted scenarios
as they do for real-world ML applications.
As a result,
these upper-bound DP guarantees can be very loose
(i.e., 
overly pessimistic)
and the actual privacy loss
in real-world applications
may be many orders-of-magnitude lower 
than what is indicated by DP guarantees;
this is 
especially true 
in the analysis of ML models
trained using 
DP stochastic gradient descent (DP-SGD)~\cite{DP-DL}.
In addition,
the same  model
may be subjected---without any change or retraining---to 
different DP analyses
that give different upper bounds,
making DP guarantees even harder
to understand.

In particular,
a casual reader of the study by
Jayaraman and Evans
in USENIX Security 2019
might conclude that
``relaxed definitions of differential privacy'' 
should be avoided, 
because they
``increase the measured privacy leakage''
in the empirical study of DP machine-learning models~\cite{je}.

In this note, we demonstrate that this study is consistent with a different interpretation. Namely, the ``relaxed definitions'' are strict improvements and provide orders-of-magnitude tighter guarantees without changing the real-world privacy loss. We also reproduce the results of Jayaraman and Evans~\cite{je}, extend the bounds on privacy leakage defined by Yeom et al.~\cite{yeometal}, and demonstrate empirically a tighter connection between privacy leakage and differential privacy bounds. 

\section{An Apparent Paradox?}

The privacy of ML models trained by 
iterative DP-SGD with clipping and Gaussian noise~\cite{DP-DL} can
be analyzed via \emph{na\"ive}~\cite{ODO} or \emph{advanced}~\cite{DRV} composition theorems.
They may also be analyzed via more sophisticated and refined definitions
such as zero-Concentrated DP (\emph{zCDP})~\cite{zCDP} and R\'enyi DP (\emph{RDP})~\cite{RDP}.

For a fixed \eps{} upper-bound DP guarantee,
Jayaraman and Evans 
train models that achieve the target \eps{} (irrespective of utility)
under each of the above definitions;
subsequently,
they measure the models' empirical privacy loss
as the success rate of 
a variant of the membership inference attack in~\cite{yeometal}.
They find that the empirical attacks have higher success probability
for models
associated with refined
definitions (zCDP and RDP).
This result feels like a paradox, as zCDP and RDP were developed
to provide \emph{stronger} privacy guarantees incorporating tighter
and more advanced techniques than prior works.

This apparent paradox is resolved as follows. 
A model trained with DP-SGD 
has some (unknown) fixed actual privacy guarantee as well as (known) fixed utility. 
Such an existing model's empirical privacy
cannot be changed
by its re-analysis under refined definitions (we can only aim to get tighter upper bounds) 
and this is consistent with the study's results~\cite{je}.
Conversely, simpler DP definitions have (orders-of-magnitude) greater gaps between the upper-bound $\eps$
and actual privacy loss whereas empirical measurements are tied only to the actual privacy loss.
By training models to a fixed upper-bound $\eps$ 
under different DP definitions,
the models' \emph{actual} privacy loss will vary wildly (as will their utility).
We should expect 
the lowest empirical attack success rate
(and the lowest utility)
for the simplest DP definitions' models which is borne out by the study~\cite{je}.

\section{A New Interpretation}
\label{sec:new-interpretation}

\setlength{\fboxsep}{1pt}
\renewcommand{\arraystretch}{1.1}
\newcommand{\mybox}[1]{\fcolorbox{white}{yellow}{#1}}
\begin{center}
\begin{tabular}{r|l|l|l|l}
  \toprule
  \multirow{2}{*}{
  $\boldsymbol{\varepsilon}$} & \multicolumn{4}{c}{(\textit{cross-entropy loss}, \textit{count of attack success})} \\ \cline{2-5}
  & \textbf{Na\"ive} & \textbf{Advanced} & \textbf{zCDP} &
  \textbf{RDP} \\
  \hline
  1 & (.93, 0) & (.93, 0) & (.88, 0) & \mybox{(\textbf{.51}, 122)} \\
  10 & (.90, 0) & (.87, 0) & \mybox{(\textbf{.47}, 157)} & (.09, 329) \\
  100 & \mybox{(\textbf{.48}, 152)} & \mybox{(\textbf{.53}, 138)} & (.08, 362) & (.00, 456) \\
  \bottomrule
\end{tabular}
\end{center}

Consider this subset of
Table 5 in Jayaraman and Evans~\cite{je}
where pairs denote
cross-entropy loss and count of attack successes at 5\% FPR respectively.

On the diagonal, in yellow,
the empirical privacy loss can be seen to be approximately the same
for models with the same accuracy,
as predicted by the above discussion.

Training loss is a proxy for the effective noise added during DP-SGD training;
therefore, we can
``rotate the table'' to fix the noise (or
loss) for each row 
and reformat the table to show
($\eps$ upper bound, count of attack successes at 5\% FPR).
\setlength{\fboxsep}{1pt}
\begin{center}
\begin{tabular}{r|l|l|l|l}
  \toprule
  \textbf{$\mathbf{\approx}$} &
  \multicolumn{4}{c}{(\textit{$\eps$ upper bound}, \textit{count of attack successes})} \\ \cline{2-5}
  \textbf{Loss} & \textbf{Na\"ive} & \textbf{Advanced} & \textbf{zCDP} &
  \textbf{RDP} \\
  \hline
  .93 & (1, 0) & (1, 0) & (.1, 0) & (.05, 0) \\
  .65 & (50, 16) & (50, 73)
  & (5, 45) & (.5, 27)
  \\
  .50 & \mybox{(100, 152)} & \mybox{(100, 138)} & \mybox{(10, 157)} & \mybox{(1, 122)} \\
  \bottomrule
\end{tabular}
\end{center}

From this reformatted table,
it is obvious that---as expected---the more
advanced DP analysis 
provides orders-of-magnitude tighter upper-bound guarantees,
without changing the empirical privacy loss. 

\section{Reproducing Results}
We reproduce a subset of the results of Jayaraman and Evans~\cite{je}. In our first set of results, we picked $\eps$-value targets for training a Logistic Regression classifier for the Purchase-100 dataset and evaluated two aspects: (1) the resulting training accuracy relative to the baseline non-private training, and (2) the success probability of the membership inference attack from Yeom et al.~\cite{yeometal} comparing cross-entropy loss of the input to the average loss over the training set to classify the input as train or test. We present the results of (2) at a 5\% false positive rate, as in~\cite{je}.

\setlength{\fboxsep}{1pt}
\begin{center}
\begin{tabular}{r|l|l|l}
  \toprule
    \multirow{2}{*}{
  $\boldsymbol{\varepsilon}$} & \multicolumn{3}{c}{(\textit{training accuracy rel.~to baseline}, \textit{attack TPR})} \\ \cline{2-4}
  & \textbf{Na\"ive} & \textbf{Advanced} &
  \textbf{RDP} \\
  \hline
  1 & (2\%, 5.1\%) & (4\%, 5.2\%) & \mybox{(\textbf{14\%}, 5.6\%)} \\
  10 & (3\%, 5.1\%) & \mybox{(\textbf{12\%}, 5.6\%)} & (71\%, 7.8\%) \\
  100 & \mybox{(\textbf{11\%}, 5.9\%)} & (34\%, 6.1\%) & (92\%, 8.1\%)\\
  \bottomrule
\end{tabular}
\end{center}

We note a couple of changes. First, we do not report values where $\eps$ was targeted under a zCDP analysis. In the stochastic gradient descent setting of training models with Gaussian noise, zCDP and RDP definitions are identical and the results reported in~\cite{je} are simply weaker bounds that ignore how sampling affects the privacy bounds.

Second, we present much higher success rates for counts of attack success because we report this in a slightly different manner. Jayaraman and Evans report the number of consistently recovered members across five trials at 5\% false positive rate for each trial. We simply report the average number of recovered members across these trials. While the former gives an intuitive look into the efficacy of these attacks, the latter enables us to translate the \emph{true} TPR and FPR rates into lower-bound estimates for $\eps$-DP values of the trained model (see Section~\ref{sec:eps-bounds-attacks} for more information).

Similarly, in Table~\ref{tab:repro} we reproduce the results that go into the reformatted table from Section~\ref{sec:new-interpretation} by keeping the noise that goes into training as the constant across which both analysis upper bounds and attacks are compared. Once again, we conclude that advanced DP analyses provide significant improvements in the upper-bound guarantees while the empirical privacy loss, as measured by attacks, remains the same.

\begin{table*}[!htb]
\caption{Reproduced and re-oriented results on training models with noise.}
\label{tab:repro}
\setlength\tabcolsep{5pt} %
\centering
\begin{tabular*}{0.57\textwidth}{r|c|c|c|c|c}
  \toprule
    \multirow{2}{*}{\textbf{Noise mult.}} &
    \multirow{2}{*}{\textbf{Accuracy}} &
    \multirow{2}{*}{\textbf{Attack TPR}} &
    \multicolumn{3}{c}{\textit{$\eps$ at $\delta=10^{-5}$}} \\ \cline{4-6}
  & & & \textbf{Na\"ive} & \textbf{Advanced} &
  \textbf{RDP} \\
  \hline
  136.67$\times$ & 1.9\% & 5\% & 4.07 & 0.31 & 0.05\\
  13.76$\times$ & 7.0\% & 5.3\% & 51.2 & 4.23 & 0.5\\
  6.99$\times$ & 14\% & 5.6\% & 125 & 12.3 & 1\\
  1.06$\times$ & 71\% & 7.8\% & 8421 & 37722 & 10\\
  0.44$\times$ & 92\% & 8.9\% & 44770 & 3.5$\times 10^8$ & 100\\
  \bottomrule
\end{tabular*}
\end{table*}

\section{Epsilon Bounds from Attacks}
\label{sec:eps-bounds-attacks}

A model trained with $(\eps, \delta)$-DP guarantee mitigates membership inference attacks. Yeom et al.~\cite{yeometal} showed that the advantage in a membership inference game (defined equivalently as the difference between the TPR and FPR of the attack) can be bounded above as a function  of $\eps$. We get a tighter estimate (as a function of $\eps$ and $\delta$) below.

Equipped with this tighter bound, we can translate a membership inference attack with some success advantage to a lower bound on the \emph{actual} $\eps$ of the trained model. In other words, \emph{if} the model had an $\eps$ guarantee lower than this lower bound, the membership inference adversary cannot have such a high success probability. This allows us to consider a uniform picture of such an evaluation on differentially privately trained models---the best analysis provide the tightest upper bounds on the actual $\eps$, while the best attacks translate to the tightest lower bounds on the actual $\eps$. We can then conclude that the actual privacy guarantee of the trained model lies somewhere in between. 

\subsection{A tighter upper bound}

We strengthen the bound in~\cite[Theorem 1]{yeometal} by starting with the following proposition from Hall et al.~\cite[Proposition 4]{halldp} paraphrased.

\newtheorem{prop}{Proposition}

\begin{prop}
\label{prop:1}
Let $Y \sim \mathcal{M}(D)$ be the output of a mechanism over a dataset satisfying $(\eps,\delta)$-DP. Any test of the hypothesis that a particular $x \in D$ on input $Y$ with a false positive rate $\alpha$ has a true positive rate bounded above by $e^\eps \cdot \alpha + \delta$.
\end{prop}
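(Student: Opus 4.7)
The plan is to recognize that this proposition is essentially the hypothesis-testing interpretation of $(\eps,\delta)$-DP, so the main work is to set up the testing framework carefully and then invoke the DP definition on the rejection region.

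First, I would frame the test as a binary hypothesis test between two neighboring datasets: the null $H_0$ corresponding to the mechanism being run on $D' = D \setminus \{x\}$ (so that $x$ was not in the input), and the alternative $H_1$ corresponding to the mechanism being run on $D$ (so that $x$ was in the input). Any (possibly randomized) test based on the observation $Y$ can be written as a measurable function $\phi \colon \mathcal{Y} \to [0,1]$, where $\phi(Y)$ is the probability of rejecting $H_0$. The false positive rate is then $\alpha = \mathbb{E}[\phi(\mathcal{M}(D'))]$ and the true positive rate is $\mathrm{TPR} = \mathbb{E}[\phi(\mathcal{M}(D))]$.

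Next, I would reduce to the deterministic case: it suffices to bound $\Pr[\mathcal{M}(D) \in S] \leq e^\eps \Pr[\mathcal{M}(D') \in S] + \delta$ for every measurable rejection set $S$, since any randomized $\phi$ is a convex combination of indicator tests and both TPR and FPR are linear in $\phi$. But this inequality is exactly the $(\eps,\delta)$-DP guarantee applied to the neighboring pair $(D, D')$: substituting $\Pr[\mathcal{M}(D') \in S] = \alpha$ gives $\mathrm{TPR} \leq e^\eps \alpha + \delta$.

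The main conceptual obstacle, rather than a technical one, is justifying that it suffices to consider tests based purely on the output $Y$ (as opposed to tests that might use side information about $D$): this follows because the proposition is stated for a fixed candidate $x$ and a fixed alternative dataset structure, so the adversary's decision rule is a function of $Y$ alone, and the DP guarantee applies uniformly to every such rule. A minor care point is handling randomized tests cleanly via the linearity argument above, rather than trying to apply DP to a non-set-valued quantity directly.
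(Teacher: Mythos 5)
Your argument is correct and is essentially the standard hypothesis-testing characterization of $(\eps,\delta)$-DP; the paper offers no proof of its own here (it simply cites the result from Hall et al.), and your derivation --- applying the DP inequality $\Pr[\mathcal{M}(D)\in S]\le e^{\eps}\Pr[\mathcal{M}(D')\in S]+\delta$ to the rejection region for the neighboring pair $D$, $D'=D\setminus\{x\}$ --- is exactly the intended one. The only loose point is the phrase that a randomized test $\phi$ is a ``convex combination of indicator tests''; to be precise one should use the layer-cake identity $\mathbb{E}[\phi(\mathcal{M}(D))]=\int_0^1\Pr[\phi(\mathcal{M}(D))>t]\,dt$ and apply the DP inequality to each superlevel set before integrating, but this is a cosmetic repair and the conclusion is unaffected.
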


Applying this, we can derive the following proposition in a straightforward manner.
\begin{prop} \label{prop:2}
The membership inference attack advantage of an adversary against a model trained with $(\eps,\delta)$-DP is bounded above by $1-e^{-\eps} + \delta \cdot e^{-\eps}$.
\end{prop}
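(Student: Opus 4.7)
The plan is to set up the membership inference game, express the advantage as a simple difference of positive rates, and then invoke Proposition~\ref{prop:1} in its rearranged form as a lower bound on the FPR rather than as the ``native'' upper bound on the TPR.

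First, I would unpack the membership advantage from Yeom et al.\ as $2\Pr[b'=b]-1$. Expanding over the uniform choice of the secret bit $b$, this equals $\Pr[b'=1\mid b=1]-\Pr[b'=1\mid b=0] = \beta - \alpha$, where $\beta$ and $\alpha$ are the adversary's true- and false-positive rates for predicting ``member.'' The two conditional output distributions come from running $\mathcal{M}$ on datasets that differ only in the presence of the target record, so $(\eps,\delta)$-DP holds between them, and Proposition~\ref{prop:1} (applied to the adversary's output as a binary test of the hypothesis ``the target is in $D$'') yields $\beta \le e^\eps\alpha + \delta$.

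The key step is to resist plugging this inequality in directly---doing so only gives $\beta-\alpha\le(e^\eps-1)\alpha+\delta$, whose maximum over $\alpha\in[0,1]$ is the looser bound $e^\eps-1+\delta$---and instead to rearrange it as $\alpha \ge e^{-\eps}\beta - e^{-\eps}\delta$. Substituting this lower bound on $\alpha$ gives
\[
\beta-\alpha \;\le\; \beta - e^{-\eps}\beta + e^{-\eps}\delta \;=\; (1-e^{-\eps})\beta + e^{-\eps}\delta,
\]
and since the coefficient $1-e^{-\eps}$ is nonnegative, maximizing over $\beta\in[0,1]$ at $\beta=1$ delivers the claimed bound $1-e^{-\eps}+\delta\cdot e^{-\eps}$.

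There is no substantive technical obstacle; the sharpness is entirely algebraic, arising from whether Proposition~\ref{prop:1} is consumed on the TPR side or the FPR side of the unit interval, with the tighter factor $1-e^{-\eps}<e^\eps-1$ emerging from the latter. The only small thing worth flagging in a careful write-up is that this argument bounds the signed advantage $\beta-\alpha$ for the attacker's chosen labeling; an adversary who inverts its guesses is handled by applying the same step to the complementary output event, which is also governed by $(\eps,\delta)$-indistinguishability and hence by Proposition~\ref{prop:1}.
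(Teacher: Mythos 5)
Your proposal is correct and follows essentially the same route as the paper: both apply Proposition~\ref{prop:1} in the rearranged form $\alpha \geq e^{-\eps}\,\mathrm{TPR} - e^{-\eps}\delta$, substitute into $\mathrm{TPR}-\mathrm{FPR}$, and maximize the resulting expression $(1-e^{-\eps})\,\mathrm{TPR} + \delta e^{-\eps}$ over $\mathrm{TPR} \leq 1$. The only differences are notational (you write the TPR as $\beta$ where the paper writes $1-\beta$) and your added remark explaining why the naive substitution yields the looser $e^\eps - 1 + \delta$ bound, which is a helpful observation but not a change of method.
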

\begin{proof}
Following the lines of the proof in~\cite[Theorem 1]{yeometal}, the membership advantage is equal to $\mathrm{TPR} - \mathrm{FPR}$. Let $\alpha$ denote the FPR and $(1-\beta)$ denote the TPR.\footnote{We use $\alpha$ and $\beta$ following conventional notation to denote significance level and Type II errors (or false negatives) respectively.} Proposition~\ref{prop:1} implies that $1-\beta \leq e^\eps \cdot \alpha + \delta \Longrightarrow \alpha \geq e^{-\eps}(1-\beta) - \delta \cdot e^{-\eps}$.

Therefore, the membership advantage, $(1-\beta)-\alpha$ is less than or equal to $ (1-\beta) - e^{-\eps}(1-\beta) + \delta \cdot e^{-\eps}$, which is equal to $ (1-e^{-\eps})(1-\beta) + \delta \cdot e^{-\eps}$. The proposition follows by noting that $\beta \geq 0$ and $e^{-\eps} \leq 1$.
\end{proof}

\subsection{Translating bounds}
We translate the stronger upper bound on attack advantage to a lower bound on the actual $\eps$ of the trained model in a straightforward manner by rearranging terms of Proposition~\ref{prop:2}.

\begin{prop}
\label{prop:eps-lower-bound}
For a given value of $\delta$, a noisily trained model admitting a membership inference attack, as defined by Yeom et al.~\cite{yeometal} with advantage $\gamma$ can only be $(\eps,\delta)$-DP for \begin{equation}\label{eq:eps-lower-bound}
\eps \geq \log\left(\frac{1-\delta}{1-\gamma}\right).\end{equation}
\end{prop}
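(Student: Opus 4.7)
The plan is to obtain the lower bound on $\eps$ as a contrapositive reformulation of Proposition~\ref{prop:2}. That proposition gives, for every $(\eps,\delta)$-DP mechanism, an upper bound on the membership inference advantage of the form $1 - e^{-\eps} + \delta \cdot e^{-\eps}$. Consequently, if some adversary actually achieves advantage $\gamma$ against a mechanism that is $(\eps,\delta)$-DP, then $\eps$ and $\delta$ must be large enough to accommodate that $\gamma$, i.e.\ $\gamma \leq 1 - e^{-\eps} + \delta \cdot e^{-\eps}$. Reading this as a constraint on $\eps$ with $\delta$ and $\gamma$ fixed yields the claimed lower bound.

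The remaining work is pure algebra, which I would carry out in the following order. First I would rewrite the right-hand side of Proposition~\ref{prop:2} in factored form as $1 - (1-\delta)\, e^{-\eps}$, making it transparent that $\delta$ only enters as a multiplicative correction to the exponential term. Next I would rearrange $\gamma \leq 1 - (1-\delta)\, e^{-\eps}$ to isolate the exponential, obtaining $(1-\delta)\, e^{-\eps} \leq 1 - \gamma$. Since $\delta < 1$ in any meaningful DP regime, I can divide by $1-\delta$ without flipping the inequality, arriving at $e^{-\eps} \leq (1-\gamma)/(1-\delta)$. Inverting and taking logarithms then gives $\eps \geq \log\bigl((1-\delta)/(1-\gamma)\bigr)$, which is exactly \eqref{eq:eps-lower-bound}.

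There is no substantive obstacle here, so ``the hard part'' is really just making sure the edge cases are handled cleanly. Specifically, I would note that the logarithm is well-defined precisely when $\gamma < 1$; an adversary with advantage $\gamma = 1$ would correspond to a perfect distinguisher and, consistent with the formula, force $\eps = \infty$. I would also observe that when $\gamma \leq \delta$ the right-hand side of \eqref{eq:eps-lower-bound} is nonpositive, so the bound is vacuous and any $\eps \geq 0$ suffices, which matches intuition: an advantage that is no larger than the $\delta$ slack is already explainable by the failure probability inherent in $(\eps,\delta)$-DP. With these remarks in place, the proof reduces to citing Proposition~\ref{prop:2} and performing the four-line rearrangement above.
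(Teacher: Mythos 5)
Your proposal is correct and matches the paper's argument exactly: the paper likewise obtains Proposition~\ref{prop:eps-lower-bound} by rearranging the bound $\gamma \leq 1 - e^{-\eps} + \delta\, e^{-\eps}$ from Proposition~\ref{prop:2}, which it states without spelling out the algebra you carry through. Your additional remarks on the edge cases $\gamma = 1$ and $\gamma \leq \delta$ are sound but not needed for the claim.
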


We can reproduce a subset of Table~\ref{tab:repro} with attack-derived lower bounds noting that the actual privacy of the model lies somewhere between the best upper-bound estimate and this lower bound. (We also note that the lower bounds are loose as they are computed at FPR of 5\%, which does not necessarily translate to the best possible membership inference advantage.)
\begin{table}[!htpb]
\begin{center}
\begin{tabular}{r|c|c|c|c}
  \toprule
    \multirow{2}{*}{\textbf{Noise}} &
    \multirow{2}{*}{\textbf{$\boldsymbol{\eps}$-lower bound}} & \multicolumn{3}{c}{
    \textbf{$\boldsymbol{\eps}$-upper bound}} \\ \cline{3-5}
  &  & \textbf{RDP} & \textbf{Advanced} &
  \textbf{Na\"ive} \\
  \hline
  136.67$\times$ & 0 & 0.05 & 0.31 & 4.07
  \\
  13.76$\times$ & 0.003 & 0.5 & 4.23 & 51.2
  \\
  6.99$\times$ & 0.006 & 1 & 12.3 & 125
  \\
  1.06$\times$ & 0.04 & 10 & 37722 & 8421
  \\
  0.44$\times$ & 0.05 & 100 & 3.5$\times 10^8$ & 44770\\
  \bottomrule
\end{tabular}
\end{center}
\caption{Translated $\eps$-lower bounds and their corresponding upper bounds for various noise scales.}
\label{tab:lowerbounds}
\end{table}
\begin{figure}[!htpb]
\includegraphics[width=\columnwidth]{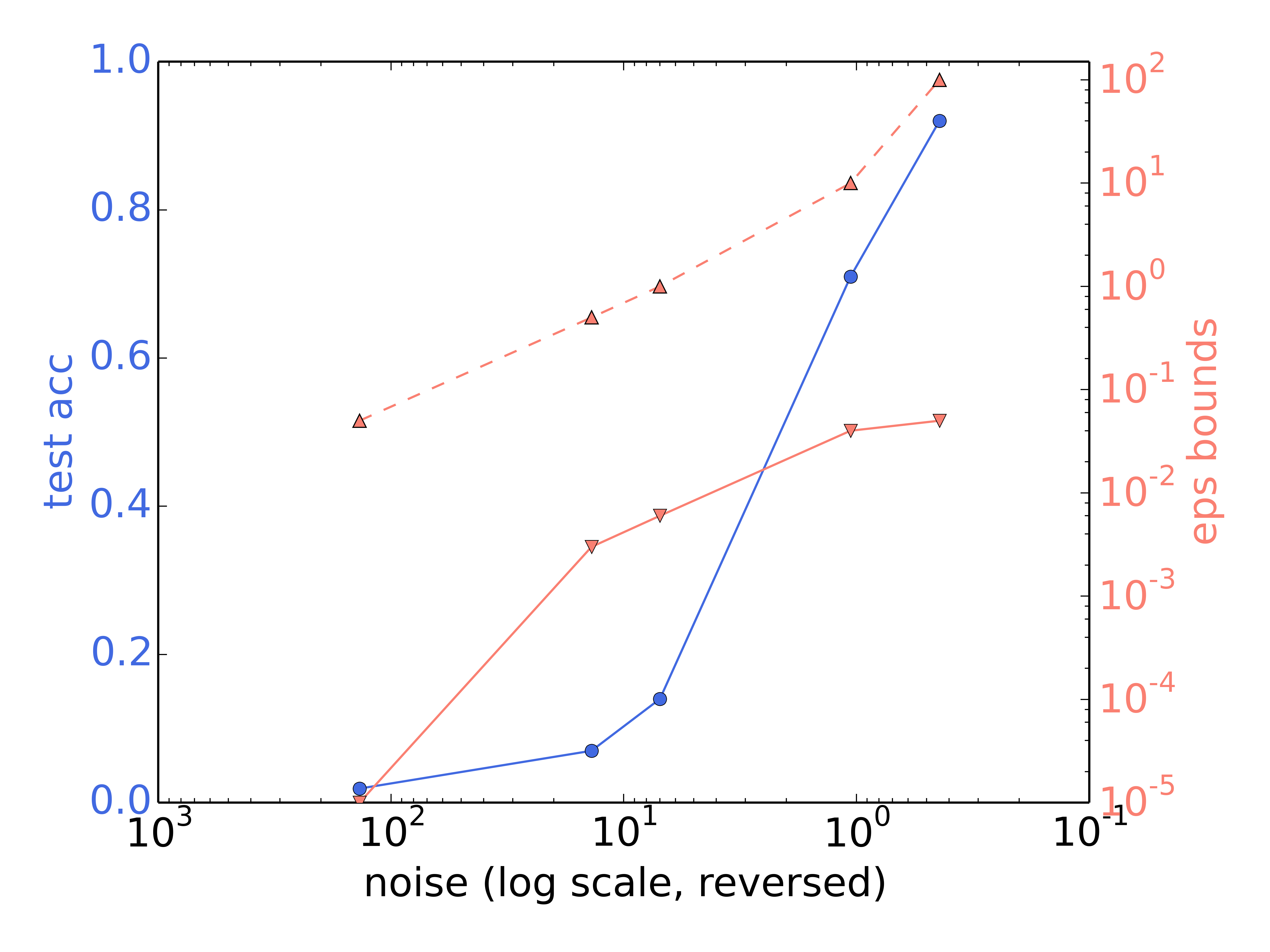}
\caption{The values from Table~\ref{tab:lowerbounds} plotted on a log-log graph that visualizes the gap between the lower and upper bounds.}
\end{figure}

\subsection{Region of privacy}

We combine the lower bound from Proposition~\ref{prop:eps-lower-bound} on the \emph{best} possible membership inference advantage (by choosing a threshold that maximizes $\mathrm{TPR}-\mathrm{FPR}$) and the upper bound from the \emph{best} possible analysis (RDP) to produce the graph below. The graph plots both bounds across differentially-privately trained models with noise varying over several orders of magnitude. The region between the two curves constitute the ``region of privacy'' where the actual privacy of the model lies. By improving the privacy analysis of models and getting stronger attacks which imply better lower bounds, we can narrow this region and get a more accurate estimate of how private the model is.

\begin{figure}[!htpb]
\includegraphics[width=\columnwidth]{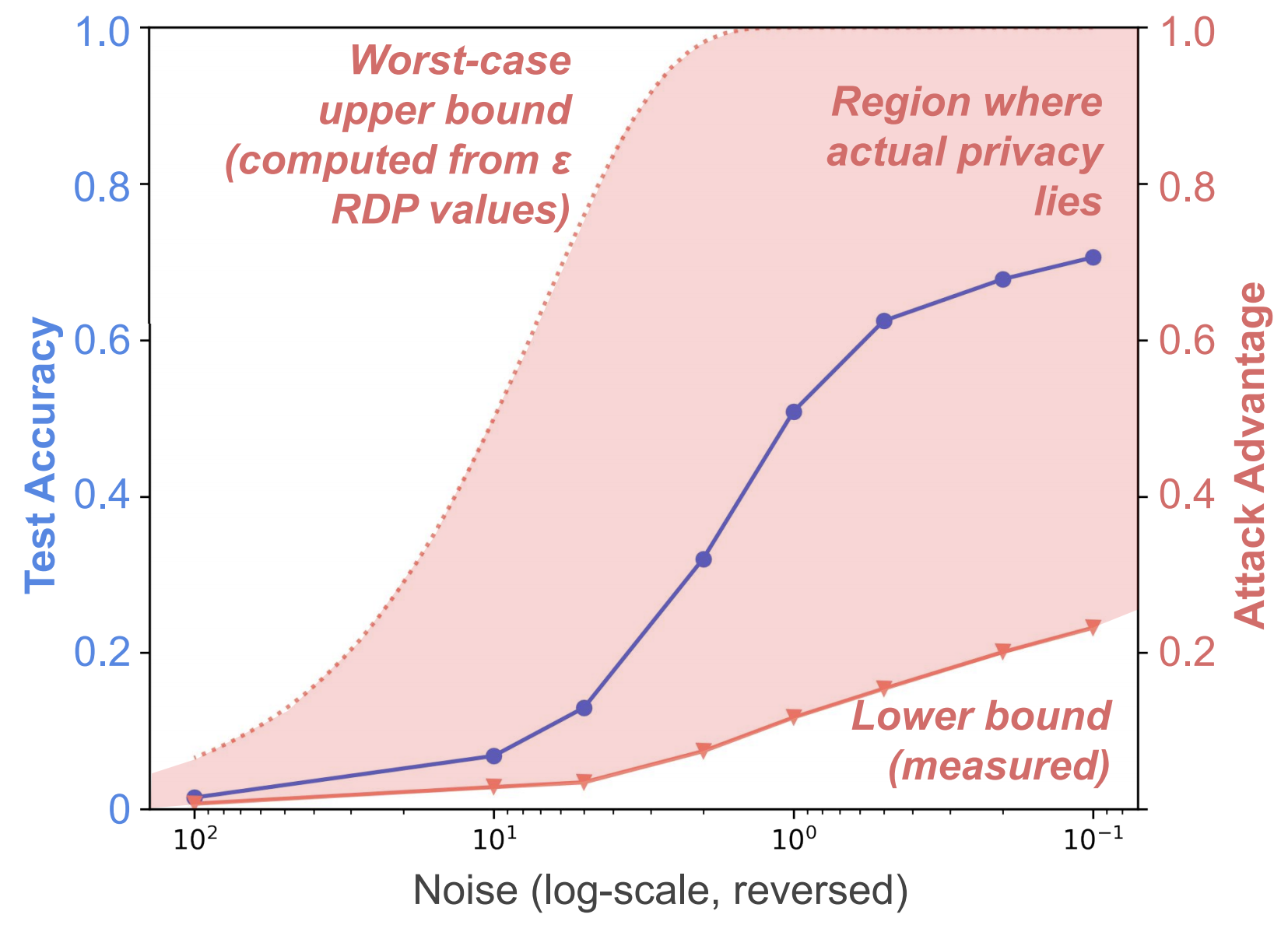}
\caption{Graph showing the region where the privacy of the model lies when trained with different noise levels.}
\end{figure}

\subsection{Discussion}
We conclude by noting important subtleties related to deriving epsilon bounds from attacks. Throughout this discussion, we make an implicit assumption about the training and test data. An attacker's advantage as defined in Yeom et al.~\cite{yeometal} being zero in an ideal scenario requires that the training and test distributions themselves be indistinguishable. Trivially, even a model trained with very good privacy properties might still yield a large attack advantage if the distributions are sufficiently different. In fact, one might be able to artificially construct pathological examples of good machine learning models trained with good accuracy/privacy tradeoffs that trivially fail these tests because, say, all train images start with a 0 pixel, and all test images start with a 1 pixel.

These tests shed a more accurate light on the privacy of the model if we simultaneously endeavor to make sure that the train and test distributions are as close as possible. This is closely related to similar work in the broader ML community exploring the connection between model generalization and artefacts in the train/test sets such as in the case of MNIST~\cite{coldcase} and CIFAR~\cite{recht2018cifar}.

A second subtlety we do not address in this work is the choice of threshold when applying the attacks from Yeom et al.~\cite{yeometal}. One straightforward way to choose this would be to consider average loss, as suggested in the original paper, but we could strengthen the attacks by considering the \emph{best} possible threshold that yields the largest attack advantage. While this sort of arbitrary auxiliary information (albeit minimal) would break the rigorous nature of the analysis of $\eps$-lower bounds, we still deem the resulting lower bound useful in practice for two reasons---the present upper bound analyses are loose, and the attacks mounted are fairly simplistic and therefore fairly loose lower bounds themselves. Strengthening lower bounds a little bit with arbitrary auxiliary information will not lead to inaccurate results.

\bibliographystyle{plain}
\bibliography{poster-abstract}
\end{document}